\documentclass{article} % For LaTeX2e
\usepackage{iclr2026_conference,times}

\usepackage{amsmath,amsfonts,bm}

\def\eqref#1{equation~\ref{#1}}
\def\1{\bm{1}}

\DeclareMathAlphabet{\mathsfit}{\encodingdefault}{\sfdefault}{m}{sl}
\SetMathAlphabet{\mathsfit}{bold}{\encodingdefault}{\sfdefault}{bx}{n}

\newcommand{\E}{\mathbb{E}}

\newcommand{\softmax}{\mathrm{softmax}}

\newcommand{\Var}{\mathrm{Var}}

\usepackage{microtype}
\usepackage{amsmath,amssymb,amsthm,mathtools}
\usepackage{booktabs}
\usepackage{hyperref}
\usepackage{algorithm}
\usepackage{algorithmic}
\usepackage{hyperref}
\usepackage{url}
\usepackage{xcolor}
\usepackage{graphicx}
\usepackage{enumitem}
\usepackage{multicol}
\usepackage{multirow}

\newtheorem{theorem}{Theorem}
\newtheorem{lemma}{Lemma}
\newtheorem{corollary}{Corollary}

\renewcommand{\E}{\mathbb{E}}
\renewcommand{\Var}{\mathrm{Var}}
\newcommand{\ESS}{\mathrm{ESS}}
\renewcommand{\softmax}{\mathrm{softmax}}
\newcommand{\vocab}{\mathcal{V}}

\newcommand{\EOS}{\text{EOS}}
\newcommand{\Unif}{\mathrm{Unif}}
\title{Power-SMC: Low-Latency Sequence-Level Power Sampling for Training-Free LLM Reasoning}

\author{Seyedarmin Azizi$^u$,
Erfan Baghaei Potraghloo$^u$, 
Minoo Ahmadi$^u$,
Souvik Kundu$^i$, \\
\textbf{Massoud Pedram}$^u$\\
$^u$University of Southern California,
Los Angeles, USA \\
$^i$Intel Labs, USA\\
\texttt{\{seyedarm, baghaeip, minooahm, pedram\}@usc.edu}, \\  \texttt{souvikk.kundu@intel.com}\\
}

\iclrfinalcopy % Uncomment for camera-ready version, but NOT for submission.
\begin{document}

\maketitle
\begin{abstract}
Many recent reasoning gains in large language models can be explained as distribution sharpening: biasing generation toward high-likelihood trajectories already supported by the pretrained model, rather than modifying its weights. A natural formalization is the sequence-level power distribution $\pi_\alpha(y\mid x)\propto p_\theta(y\mid x)^\alpha$ ($\alpha>1$), which concentrates mass on whole sequences instead of adjusting token-level temperature. Prior work shows that Metropolis--Hastings (MH) sampling from this distribution recovers strong reasoning performance, but at order-of-magnitude inference slowdowns. We introduce \textbf{Power-SMC}, a training-free Sequential Monte Carlo scheme that targets the same objective while remaining close to standard decoding latency. Power-SMC advances a small particle set in parallel, corrects importance weights token-by-token, and resamples when necessary all within a single GPU-friendly batched decode. We prove that temperature $\tau=1/\alpha$ is the unique prefix-only proposal minimizing incremental weight variance, interpret residual instability via prefix-conditioned Rényi entropies, and introduce an exponent-bridging schedule that improves particle stability without altering the target. On MATH500, Power-SMC matches or exceeds MH power sampling while reducing latency from $16$--$28\times$ to $1.4$--$3.3\times$ over baseline decoding. The code is available at \href{https://github.com/ArminAzizi98/Power-SMC}{https://github.com/ArminAzizi98/Power-SMC}.
\end{abstract}
\section{Introduction}
\label{sec:intro}
% ============================================================

A recurring theme in recent LLM research is that reasoning gains often attributed to reinforcement learning (RL) post-training can instead be viewed as \emph{distribution sharpening}: generation is biased toward high-likelihood trajectories already supported by the base model~\citep{karan2025reasoning, yue2025does}.
One concrete sharpening objective is the \emph{sequence-level power distribution}.
For a prompt~$x$, let $p_\theta(\cdot\mid x)$ be a pretrained autoregressive language model.
For $\alpha\ge 1$, define
\begin{equation}
\pi_\alpha(y\mid x)\;=\;\frac{p_\theta(y\mid x)^\alpha}{Z_\alpha(x)},
\qquad
Z_\alpha(x)=\sum_{y} p_\theta(y\mid x)^\alpha.
\label{eq:power}
\end{equation}
Intuitively, raising $\alpha>1$ concentrates probability on higher-likelihood sequences without changing the model parameters.

\citet{karan2025reasoning} propose using Metropolis--Hastings (MH) sampling to draw from~\eqref{eq:power} and demonstrate that this training-free strategy can match RL-based post-training on reasoning benchmarks.
In LLMs, however, MH has a practical bottleneck: each MH move typically requires regenerating a suffix of many tokens, and the accept/reject decisions are inherently sequential.
This serial structure can dominate wall-clock time even when inference is implemented with standard Transformer KV caching.

We introduce \textbf{Power-SMC}, a particle-based alternative that targets the same objective~\eqref{eq:power} while making the main computation \emph{batch-parallel}.
Our starting point is to view autoregressive generation as a sequence of evolving prefix distributions---a standard Feynman--Kac formulation ~\citep{delmoral2004,delmoral2006smc}---and to apply Sequential Monte Carlo (SMC), a family of algorithms that approximate a target distribution using a set of weighted samples (``particles'') and occasional \emph{resampling} steps.
In our setting, SMC maintains $N$ parallel candidate continuations, updates their weights as tokens are decoded, and resamples (duplicating high-weight candidates and discarding low-weight ones) only when the weights become too uneven.

Our contributions are as follows.

\begin{enumerate}[leftmargin=1.8em,itemsep=4pt]
\item \textbf{Power-SMC algorithm (Section~\ref{sec:method}).}\;
We formulate power sampling as a Feynman--Kac flow over prefixes, derive the exact sequential importance correction for an arbitrary prefix-only token proposal, and combine ESS-triggered resampling with a cache-safe KV-cache reindexing strategy compatible with standard Transformer decoding stacks.
We also describe an exact exponent-bridging procedure (\emph{$\alpha$-ramping}) that preserves the final target while improving particle stability.

\item \textbf{Local optimality of $\tau=1/\alpha$ (Section~\ref{sec:theory}).}\;
We prove that among all prefix-measurable token proposals, $q_t^\star(\cdot\mid x,y_{<t})\propto p_\theta(\cdot\mid x,y_{<t})^\alpha$---corresponding exactly to temperature $\tau=1/\alpha$---is the \emph{unique} minimizer of the conditional variance of the incremental importance weights.
We interpret the remaining path-wise weight dispersion via prefix-conditioned R\'enyi entropies, clarifying what sources of degeneracy persist even under this locally optimal proposal.

\item \textbf{Latency analysis and empirical gains (Sections~\ref{sec:cost}--\ref{sec:experiments}).}\;
We provide an engine-independent cost model that formalizes an overhead floor for MH under block-edit proposals and highlights the advantage of batch-parallel SMC.
On MATH500 across three models, Power-SMC matches or exceeds MH power sampling while reducing latency from 16--28$\times$ to 1.4--3.3$\times$ relative to baseline decoding.
\end{enumerate}

% ============================================================
\section{Related Work}
\label{sec:related}
% ============================================================

\paragraph{Power sampling and MCMC for LLMs.}
\citet{karan2025reasoning} introduce MH-based sampling from the sequence-level power distribution~\eqref{eq:power} and show strong empirical gains on reasoning tasks.
Their method is statistically principled---MH targets the correct stationary distribution under standard conditions---but can be expensive for LLM inference because proposals often require regenerating long suffixes and the MH loop is inherently sequential.

\paragraph{Token-level approximations to power sampling.}
A recent direction avoids iterative MCMC by approximating the power-distribution next-token conditional.
\citet{ji2026scalable} derive a representation of this conditional as a \emph{scaled low-temperature distribution}, where the scaling factor depends on the likelihood of future continuations, and approximate it using Monte Carlo rollouts (with bias reduction via jackknife correction).
Their perspective is naturally ``lookahead-based'': the exact power conditional depends on a future-dependent term that is intractable to compute exactly, and their algorithm approximates this dependence by explicitly sampling future continuations.
Power-SMC is deliberately different: we work in a \emph{prefix-only} regime where the token proposal $q_t(\cdot\mid x,y_{<t})$ depends on the current prefix but not on sampled futures.
Within this constrained but inference-friendly class, we prove an optimality guarantee: temperature $\tau=1/\alpha$ is the unique proposal that eliminates token-choice variance in the incremental SMC weights.
Any remaining mismatch to the global target is addressed by sequential importance weighting and resampling across particles, rather than by per-token lookahead estimation.

\paragraph{Sequential Monte Carlo and particle methods.}
SMC methods approximate evolving distributions using populations of weighted particles and resampling to control degeneracy~\citep{doucet2001smc,delmoral2004,delmoral2006smc,doucet2011tutorial}.
Power-SMC adapts this framework to autoregressive decoding with Transformer KV caches, which introduces a practical requirement: resampling must correctly reorder the cached model state across particles (Appendix~\ref{sec:systems}).

\paragraph{Decoding heuristics.}
Common strategies include temperature scaling, top-$k$, nucleus~\citep{holtzman2020curious}, and top-H sampling \cite{potraghloo2025top}.
These specify local token-level rules and do not, in general, sample from global sequence-level targets like~\eqref{eq:power}.
Our results clarify the role of temperature within an algorithm that \emph{does} target the global power objective.

\paragraph{Limitations addressed by Power-SMC.}
Existing approaches to power sampling introduce distinct computational bottlenecks that Power-SMC is designed to avoid.
MH power sampling~\citep{karan2025reasoning} is fundamentally sequential: accept/reject decisions induce a serial dependency, and each move regenerates a suffix whose expected length can grow with the generated prefix, leading to large overheads.
Scalable Power Sampling~\citep{ji2026scalable} eliminates the sequential MH chain but, in its current form, relies on per-token rollout estimation of future-dependent terms and (in practice) restricts attention to a candidate subset of tokens for efficiency.
Power-SMC sidesteps both issues: its proposal depends only on current logits and requires no rollouts, yet we prove it is the unique variance-minimizing choice among all proposals with this prefix-only property (Section~\ref{sec:theory}).
Global correctness is then recovered through importance weights that provide an \emph{exact} sequential correction (no rollout approximation error).
The computational cost is one parallel forward pass per particle per decode step; because all particles advance simultaneously in a single batch, wall-clock overhead is typically modest for the batch sizes we use.

With this context, we next review the minimal background on importance sampling and SMC needed to derive Power-SMC.

% ============================================================
\section{Background}
\label{sec:background}
% ============================================================

\subsection{Autoregressive models and the power target}
Given a prompt $x$, a pretrained autoregressive model defines
\begin{equation}
p_\theta(y\mid x)=\prod_{t=1}^{T(y)} p_\theta(y_t\mid x,y_{<t}),
\label{eq:base}
\end{equation}
over EOS-terminated sequences $y=(y_1,\dots,y_{T(y)})$ with tokens in $\vocab\cup\{\EOS\}$.
For $\alpha\ge 1$, the power distribution~\eqref{eq:power} sharpens the base model by exponentiating the \emph{sequence-level} probability.

\subsection{Why token-level temperature is not globally correct}
\label{sec:bg-temp}
Token-level temperature sampling draws each token from
\begin{equation}
q_t(\cdot\mid x,y_{<t})\propto p_\theta(\cdot\mid x,y_{<t})^{1/\tau}.
\end{equation}
Even when $\tau=1/\alpha$, the resulting joint distribution $q(y\mid x)=\prod_t q_t(y_t\mid x,y_{<t})$ typically differs from $\pi_\alpha(y\mid x)$ because exponentiating each conditional independently is not the same as exponentiating the joint~\citep{karan2025reasoning}.
Power-SMC addresses this gap by \emph{combining} a token proposal with exact sequential importance corrections.

\subsection{Importance sampling}
\label{sec:bg-is}
Suppose we wish to compute expectations under a target distribution $\pi(y)=\gamma(y)/Z$ where $\gamma(y)\ge 0$ is known but the normalizing constant $Z=\sum_y \gamma(y)$ is not.
Given samples $y^{(i)}\sim q(y)$ from a proposal distribution~$q$, importance sampling assigns each sample an unnormalized weight
\begin{equation}
w(y)\;=\;\frac{\gamma(y)}{q(y)},
\end{equation}
and approximates target expectations via
\begin{equation}
\E_{\pi}[f(Y)]
\;\approx\;
\frac{\sum_i w(y^{(i)})\,f(y^{(i)})}{\sum_i w(y^{(i)})}
\qquad\text{(self-normalized IS).}
\end{equation}
When the target is a distribution over long sequences, applying IS ``all at once'' yields extremely high-variance weights.
Sequential Monte Carlo can be viewed as applying IS \emph{incrementally} along the sequence.

\subsection{Sequential Monte Carlo}
\label{sec:bg-smc}
SMC maintains $N$ weighted samples (particles) that evolve over time.
At each step, each particle proposes a new token, and its weight is multiplied by an \emph{incremental importance weight} that corrects the proposal toward the desired target.
When the weights become too uneven, SMC performs \emph{resampling}: particles with large weights are duplicated and particles with small weights are discarded, after which weights are reset.
A standard diagnostic for weight collapse is the \emph{effective sample size} (ESS):
\begin{equation}
\ESS_t \;=\; \left(\sum_{i=1}^N (W_t^{(i)})^2\right)^{-1},
\qquad
W_t^{(i)}=\frac{\tilde W_t^{(i)}}{\sum_j \tilde W_t^{(j)}},
\label{eq:ess}
\end{equation}
where $\tilde W_t^{(i)}$ denotes the unnormalized weight of particle~$i$ at step~$t$.

% ============================================================
\section{Power-SMC: Sampling $\pi_\alpha$ with a Single Batched Decode}
\label{sec:method}
% ============================================================

\subsection{Prefix flow for the power target}
Power-SMC targets $\pi_\alpha$ by defining a sequence of intermediate targets on prefixes (a Feynman--Kac flow).
Let $y_{1:t}$ denote a length-$t$ prefix.
Define the unnormalized prefix target
\begin{equation}
\gamma_t(y_{1:t}\mid x)\;:=\;p_\theta(y_{1:t}\mid x)^\alpha,
\qquad
p_\theta(y_{1:t}\mid x)=\prod_{s=1}^t p_\theta(y_s\mid x,y_{<s}),
\label{eq:gamma}
\end{equation}
and let $\pi_t=\gamma_t/Z_t$ be the corresponding normalized distribution.
With $\EOS$ treated as an ordinary token and an absorbing terminated state (Appendix~\ref{app:eos}), the induced distribution over completed sequences matches the desired power target.

\paragraph{Token proposal and incremental correction.}
Let $q_t(\cdot\mid x,y_{<t})$ be any proposal distribution over the next token that depends only on the current prefix.
Sequential importance sampling yields the incremental weight
\begin{equation}
\omega_t(y_{1:t})
\;=\;
\frac{\gamma_t(y_{1:t}\mid x)}{\gamma_{t-1}(y_{1:t-1}\mid x)\,q_t(y_t\mid x,y_{<t})}
\;=\;
\boxed{
\frac{p_\theta(y_t\mid x,y_{<t})^\alpha}{q_t(y_t\mid x,y_{<t})}.
}
\label{eq:inc}
\end{equation}
The incremental weight exactly compensates for using the proposal $q_t$ instead of the (generally intractable) power-distribution conditional.

\subsection{SMC/SIR with ESS-triggered resampling}
Algorithm~\ref{alg:smc} gives the full procedure.
We decode $N$ sequences in parallel; after each token, we update particle weights via~\eqref{eq:inc}; if the weights collapse (low ESS), we resample and continue.
The output is drawn from the final weighted particle set.

\begin{algorithm}[t]
\caption{Power-SMC / SIR for $\pi_\alpha(y\mid x)\propto p_\theta(y\mid x)^\alpha$}
\label{alg:smc}
\begin{algorithmic}[1]
\STATE \textbf{Input:} prompt $x$, LM $p_\theta$, exponent $\alpha$, particles $N$, ESS threshold $\kappa\in(0,1)$, max tokens $T_{\max}$, proposal $q_t$
\STATE Initialize $y_{1:0}^{(i)}=\emptyset$, weights $\tilde W_0^{(i)}=1$ for $i=1,\dots,N$
\STATE Initialize termination flags $\mathsf{done}^{(i)}\leftarrow \texttt{false}$ for all $i$
\FOR{$t=1$ to $T_{\max}$}
  \FOR{each particle $i=1,\dots,N$ \textbf{(parallel)}}
    \IF{$\mathsf{done}^{(i)}$}
      \STATE set $y_t^{(i)}\leftarrow \EOS$ and keep $\tilde W_t^{(i)}\leftarrow \tilde W_{t-1}^{(i)}$ \textbf{(absorbing)}
    \ELSE
      \STATE sample $y_t^{(i)}\sim q_t(\cdot\mid x,y_{<t}^{(i)})$
      \STATE update $\tilde W_t^{(i)} \leftarrow \tilde W_{t-1}^{(i)}\cdot \dfrac{p_\theta(y_t^{(i)}\mid x,y_{<t}^{(i)})^\alpha}{q_t(y_t^{(i)}\mid x,y_{<t}^{(i)})}$
      \IF{$y_t^{(i)}=\EOS$}
        \STATE set $\mathsf{done}^{(i)}\leftarrow \texttt{true}$
      \ENDIF
    \ENDIF
  \ENDFOR
  \STATE normalize $W_t^{(i)}\propto \tilde W_t^{(i)}$; compute $\ESS_t$ via~\eqref{eq:ess}
  \IF{$\ESS_t<\kappa N$}
    \STATE \textbf{systematic resampling:} draw ancestors $A_{1:N}\leftarrow \mathrm{SysResample}(W_t^{(1:N)})$
    \STATE set $y_{1:t}^{(k)}\leftarrow y_{1:t}^{(A_k)}$, reorder KV cache by $A_{1:N}$ (Appendix~\ref{sec:systems})
    \STATE set $\mathsf{done}^{(k)}\leftarrow \mathsf{done}^{(A_k)}$ and reset $\tilde W_t^{(k)}\leftarrow 1$ for all $k$
  \ENDIF
\ENDFOR
\STATE \textbf{Output:} sample $I\sim \mathrm{Categorical}(W_{T_{\max}}^{(1:N)})$ and return $y^{(I)}$.
\end{algorithmic}
\end{algorithm}

% ============================================================
\section{Local Optimality of $\tau=1/\alpha$ and a R\'enyi-Entropy View}
\label{sec:theory}
% ============================================================

This section has two goals.
First, we establish precisely what temperature $\tau=1/\alpha$ \emph{does} and \emph{does not} guarantee when used as the proposal inside Power-SMC.
Second, we provide a R\'enyi-entropy interpretation of the remaining weight variability, motivating the exponent-bridging schedule introduced at the end of this section.

\subsection{Locally variance-minimizing proposal}

Fix time $t$ and a prefix $y_{<t}$.
Write the model's next-token distribution as $p_t(v):=p_\theta(v\mid x,y_{<t})$ for $v\in\vocab\cup\{\EOS\}$.
For any prefix-only proposal $q_t(v)$, the incremental weight for drawing token $v$ is
\begin{equation}
\omega_t(v;\, y_{<t})=\frac{p_t(v)^\alpha}{q_t(v)}.
\label{eq:omega}
\end{equation}

\begin{theorem}[Locally variance-minimizing proposal for Power-SMC]
\label{thm:localopt}
Fix $t$ and a prefix $y_{<t}$.
Among all proposals $q_t$ satisfying $q_t(v)>0$ whenever $p_t(v)>0$, the unique minimizer of the conditional second moment
$\E_{v\sim q_t}\!\bigl[\omega_t(v;\, y_{<t})^2\bigr]$
(and hence of the conditional variance) is
\begin{equation}
\boxed{
q_t^\star(v\mid x,y_{<t})=\frac{p_t(v)^\alpha}{\sum_u p_t(u)^\alpha}.
}
\label{eq:qstar}
\end{equation}
Under $q_t^\star$, the incremental weight is deterministic given the prefix:
$\omega_t(v;\, y_{<t})\equiv \sum_u p_t(u)^\alpha$ for all~$v$, so
$\Var_{q_t^\star}\!\bigl(\omega_t(\cdot;\, y_{<t})\bigr)=0$.
\end{theorem}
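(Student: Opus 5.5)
The plan is to reduce the statement to a Cauchy--Schwarz inequality. Write $S:=\{v: p_t(v)>0\}$ and $Z_t:=\sum_{u} p_t(u)^\alpha$, which is finite and positive because the support is a finite vocabulary. Tokens with $p_t(v)=0$ carry weight $\omega_t=0$, so they contribute nothing to the second moment. The objective is therefore
\begin{equation*}
M(q_t):=\E_{v\sim q_t}\bigl[\omega_t(v;y_{<t})^2\bigr]=\sum_{v\in S} q_t(v)\,\frac{p_t(v)^{2\alpha}}{q_t(v)^2}=\sum_{v\in S}\frac{p_t(v)^{2\alpha}}{q_t(v)},
\end{equation*}
which is well defined by the support assumption $q_t(v)>0$ on $S$.

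Next I note that the first moment does not depend on $q_t$: $\E_{q_t}[\omega_t]=\sum_{v\in S}p_t(v)^\alpha=Z_t$. Consequently $\Var_{q_t}(\omega_t)=M(q_t)-Z_t^2$, and minimizing the second moment is the same as minimizing the variance. This settles the ``and hence'' clause.

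For the lower bound I apply Cauchy--Schwarz to the vectors $\bigl(p_t(v)^\alpha/\sqrt{q_t(v)}\bigr)_{v\in S}$ and $\bigl(\sqrt{q_t(v)}\bigr)_{v\in S}$:
\begin{equation*}
Z_t^2=\Bigl(\sum_{v\in S}\frac{p_t(v)^\alpha}{\sqrt{q_t(v)}}\sqrt{q_t(v)}\Bigr)^2\le M(q_t)\sum_{v\in S}q_t(v)\le M(q_t).
\end{equation*}
Equivalently, $M(q_t)\ge Z_t^2$ is just the statement $\Var\ge 0$.

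For equality and uniqueness, both inequalities must be tight. The first requires $\sum_{v\in S}q_t(v)=1$, so $q_t$ places no mass outside $S$. The Cauchy--Schwarz step is tight iff $p_t(v)^\alpha/\sqrt{q_t(v)}=c\sqrt{q_t(v)}$ on $S$, that is, $q_t(v)\propto p_t(v)^\alpha$. Normalizing gives $q_t=q_t^\star$. The main care point is this uniqueness bookkeeping, namely ruling out mass leaking onto tokens with $p_t(v)=0$; the slack term $\sum_{v\in S}q_t\le 1$ handles it. Finally, substituting $q_t^\star$ into \eqref{eq:omega} gives $\omega_t(v;y_{<t})=p_t(v)^\alpha\cdot Z_t/p_t(v)^\alpha=Z_t$ for every $v\in S$. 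This is the stated deterministic value, and it gives zero variance.
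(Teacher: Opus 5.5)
Your proof is correct, but it takes a different route from the paper. The paper notes that $\E_{q}[\omega]=\sum_v p(v)^\alpha$ does not depend on $q$. It then minimizes $\sum_v p(v)^{2\alpha}/q(v)$ subject to $\sum_v q(v)=1$ with a Lagrange multiplier. Stationarity gives $q(v)=p(v)^\alpha/\sqrt{\lambda}$, normalization gives $\sqrt{\lambda}=Z$, and uniqueness is asserted rather than argued.

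You instead prove the global bound $M(q_t)\ge Z_t^2$ for every admissible $q_t$ by Cauchy--Schwarz, which is equivalent to $\Var_{q_t}(\omega_t)\ge 0$. You then read off uniqueness from the equality conditions.

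The Lagrangian route has the merit of deriving the form of $q_t^\star$ rather than verifying a guessed candidate. As written, though, it only locates a critical point, and two things are missing:
\begin{itemize}
\item Global minimality and uniqueness need the extra fact that $q\mapsto\sum_{v\in S}p(v)^{2\alpha}/q(v)$ is strictly convex in the coordinates $v\in S$.
\item At a token with $p(v)=0$ and $q(v)>0$, the stationarity equation reads $\lambda=0$, which contradicts $\lambda=Z^2>0$. So the optimum sits on the boundary $q(v)=0$ for those tokens, and the interior first-order condition does not capture this.
\end{itemize}

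Your inequality settles both points at once. The slack term $\sum_{v\in S}q_t(v)\le 1$ rules out mass on zero-probability tokens, and equality in Cauchy--Schwarz forces $q_t\propto p_t^\alpha$ on $S$. Your argument is therefore the more complete of the two. You also state the deterministic weight as $\omega_t\equiv Z_t$ on $S$, which is the precise form of the paper's ``for all $v$.''
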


\begin{corollary}[Temperature form]
\label{cor:temp}
If $p_t=\softmax(\ell_t)$ for logits $\ell_t$, then $q_t^\star(v)\propto \exp(\alpha\,\ell_t(v))=\softmax(\ell_t/\tau)$ with $\tau=1/\alpha$.
\end{corollary}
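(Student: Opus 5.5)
The plan is a direct Cauchy--Schwarz argument on the conditional second moment, with the mean fixed independently of $q_t$. Let $S=\{v: p_t(v)>0\}$ and $Z_t:=\sum_{u} p_t(u)^\alpha$. For any admissible $q_t$ (positive on $S$), the conditional mean of the incremental weight is
\begin{equation*}
\E_{v\sim q_t}\bigl[\omega_t(v;\,y_{<t})\bigr]=\sum_{v\in S} q_t(v)\,\frac{p_t(v)^\alpha}{q_t(v)}=Z_t .
\end{equation*}
Tokens outside $S$ contribute weight zero, and any mass $q_t$ places off $S$ is simply lost from the sum. The mean is therefore the same constant $Z_t$ for every admissible proposal. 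As a consequence, $\Var_{q_t}(\omega_t)=\E_{q_t}[\omega_t^2]-Z_t^2$, so minimizing the second moment and minimizing the variance are the same problem. This justifies the ``and hence'' clause of Theorem~\ref{thm:localopt}.

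Next I would lower-bound the second moment. It equals $\sum_{v\in S} p_t(v)^{2\alpha}/q_t(v)$. By Cauchy--Schwarz,
\begin{equation*}
Z_t^2=\Bigl(\sum_{v\in S}\frac{p_t(v)^\alpha}{\sqrt{q_t(v)}}\sqrt{q_t(v)}\Bigr)^2\le \Bigl(\sum_{v\in S}\frac{p_t(v)^{2\alpha}}{q_t(v)}\Bigr)\Bigl(\sum_{v\in S}q_t(v)\Bigr)\le \E_{q_t}[\omega_t^2].
\end{equation*}
The last step uses $\sum_{v\in S}q_t(v)\le 1$. Equivalently, this is just $\Var\ge 0$ combined with the fixed mean.

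The one step needing care is uniqueness. Equality in the chain requires two things. First, $\sum_{v\in S}q_t(v)=1$, so $q_t$ puts no mass off $S$. Second, the Cauchy--Schwarz vectors must be proportional, i.e.\ $p_t(v)^\alpha/\sqrt{q_t(v)}=c\sqrt{q_t(v)}$ on $S$, which gives $q_t(v)\propto p_t(v)^\alpha$. Normalizing forces $q_t=q_t^\star$. An alternative phrasing is that zero variance means $\omega_t$ is $q_t$-a.s.\ constant on the support of $q_t$, which is $S$. Finally, substituting $q_t^\star$ into \eqref{eq:omega} gives $\omega_t\equiv Z_t$ directly, so its variance is $0$.

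Corollary~\ref{cor:temp} then follows by substitution. With $p_t(v)=e^{\ell_t(v)}/\sum_u e^{\ell_t(u)}$, we get $p_t(v)^\alpha\propto e^{\alpha\ell_t(v)}$, because the normalizer's power is a constant in $v$. Hence $q_t^\star=\softmax(\alpha\ell_t)=\softmax(\ell_t/\tau)$ with $\tau=1/\alpha$.
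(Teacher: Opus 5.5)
Your final paragraph matches the paper's proof of the corollary: $p_t(v)^\alpha = e^{\alpha\ell_t(v)}/(\sum_u e^{\ell_t(u)})^\alpha \propto e^{\alpha\ell_t(v)}$, and normalizing gives $q_t^\star=\softmax(\alpha\ell_t)=\softmax(\ell_t/\tau)$ with $\tau=1/\alpha$. The rest re-proves Theorem~\ref{thm:localopt}, which the corollary uses only for the formula of $q_t^\star$; your Cauchy--Schwarz route (equivalently $\E_{q_t}[\omega_t^2]\ge(\E_{q_t}\omega_t)^2=Z_t^2$) is correct and differs from the paper's Lagrange-multiplier computation, and it is tighter, since it gives a global lower bound over all admissible $q_t$, rules out mass on $\{p_t=0\}$, and gets uniqueness from the equality case instead of from a stationarity condition that would also need a convexity argument.
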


\paragraph{Interpretation.}
Corollary~\ref{cor:temp} does \emph{not} claim that temperature sampling at $\tau=1/\alpha$ alone produces exact samples from the sequence-level target~\eqref{eq:power}.
What it does say is: if the proposal $q_t$ is restricted to depend only on the current prefix (no lookahead), then $\tau=1/\alpha$ is the unique way to make the incremental correction~\eqref{eq:omega} as stable as possible at each prefix.
Specifically, it eliminates variance due to which token was sampled; remaining variance comes entirely from which \emph{prefix path} a particle happens to be on.

\subsection{R\'enyi-entropy interpretation of residual weight dispersion}
\label{sec:renyi}

Define the prefix-dependent $\alpha$-power normalizer
\begin{equation}
Z_t(\alpha;\, y_{<t}) := \sum_v p_t(v)^\alpha.
\label{eq:Zt}
\end{equation}
Under $q_t^\star$, the incremental weight~\eqref{eq:omega} equals $\omega_t\equiv Z_t(\alpha;\, y_{<t})$, which depends on the prefix but not on the sampled token.

To interpret $Z_t$, recall the R\'enyi entropy of order~$\alpha$ for a discrete distribution~$p$:
\begin{equation}
H_\alpha(p) := \frac{1}{1-\alpha}\log \left(\sum_v p(v)^\alpha\right),
\qquad \alpha>0,\;\alpha\neq 1.
\end{equation}
Applying this to $p_t(\cdot)=p_\theta(\cdot\mid x,y_{<t})$ yields
\begin{equation}
\log Z_t(\alpha;\, y_{<t})
\;=\;
(1-\alpha)\,H_\alpha\!\bigl(p_\theta(\cdot\mid x,y_{<t})\bigr).
\label{eq:renyiZ}
\end{equation}
Since $\sum_v p_t(v)^\alpha \le 1$ for $\alpha\ge 1$, we have $\log Z_t(\alpha;\, y_{<t})\le 0$.

\paragraph{Path-wise weight accumulation (SIS view).}
Consider the underlying \emph{sequential importance sampling} (SIS) weights before any resampling resets.
For particle~$i$, the accumulated log-weight at time $T_{\max}$ under the locally optimal proposal is
\begin{equation}
\log \tilde W_{T_{\max}}^{(i)}
\;=\;
\sum_{t=1}^{T_{\max}} \log Z_t(\alpha;\, y_{<t}^{(i)})
\;=\;
(1-\alpha)\sum_{t=1}^{T_{\max}} H_\alpha\!\bigl(p_\theta(\cdot\mid x,y_{<t}^{(i)})\bigr),
\label{eq:pathweight}
\end{equation}
which is non-positive.
Particles traversing prefixes with \emph{higher} next-token uncertainty (larger R\'enyi entropy) accumulate \emph{lower} weights.
Conversely, particles on more ``confident'' prefix paths receive higher weight, consistent with the sharpening intent of the power distribution.

Equation~\eqref{eq:pathweight} makes the remaining source of degeneracy concrete:
even with the locally optimal proposal, weights diverge when different particles encounter prefixes whose next-token uncertainty differs substantially.
This motivates the exponent-bridging schedule below.

\subsection{Exact exponent-bridging ($\alpha$-ramping)}
\label{sec:ramping}

To mitigate path-level weight divergence \emph{without changing the final target} $\pi_\alpha$, we introduce an exponent-bridging schedule
\[
1=\alpha^{(0)}<\alpha^{(1)}<\cdots<\alpha^{(L)}=\alpha,
\]
and define intermediate targets $\gamma_t^{(\ell)}(y_{1:t}\mid x)\propto p_\theta(y_{1:t}\mid x)^{\alpha^{(\ell)}}$.
Within stage $\ell$, the correct incremental weight is obtained by replacing $\alpha$ with $\alpha^{(\ell)}$ in~\eqref{eq:inc}.
At stage boundaries (after a chosen token index $t$), we apply the standard SMC-samplers reweighting update~\citep{delmoral2006smc}:
\[
\log \tilde W \;\leftarrow\; \log \tilde W + (\alpha^{(\ell)}-\alpha^{(\ell-1)})\cdot \log p_\theta(y_{1:t}\mid x),
\]
where $\log p_\theta(y_{1:t}\mid x)=\sum_{s=1}^t \log p_\theta(y_s\mid x,y_{<s})$ is available from the autoregressive factors.
This transitions the target from $\gamma^{(\ell-1)}$ to $\gamma^{(\ell)}$ while preserving correctness of the final target.
Within stage~$\ell$, the locally optimal proposal is $q_{t,\ell}^\star(\cdot)\propto p_t(\cdot)^{\alpha^{(\ell)}}$, i.e., temperature $\tau_\ell=1/\alpha^{(\ell)}$.
In our experiments, we use a simple linear schedule $\alpha^{(\ell)}=1+(\alpha-1)\cdot\ell/L$ over the first $T_{\mathrm{ramp}}$ tokens (details in Appendix~\ref{app:ramping}).

% ============================================================
\section{Compute and Latency Cost Analysis: MH vs.\ SMC/SIR}
\label{sec:cost}

We now formalize the latency advantage of Power-SMC over MH under an engine-independent cost model.
Both methods use KV caching, so the dominant cost is the number of incremental decode steps.

\subsection{Setup, cost model, and notation}

Let $p_\theta(\cdot\mid x)$ be an autoregressive LM and consider sampling from $\pi_\alpha(y_{1:T}\mid x)\propto p_\theta(y_{1:T}\mid x)^\alpha$.
We define one \emph{token-eval} as a single cached forward step producing next-token logits for one sequence, so a decode step at batch size~$b$ costs $b$ token-evals.
To translate token-evals into wall-clock time, let $s(b)\ge 1$ be the \emph{batch throughput multiplier} at batch size~$b$ relative to batch~$1$; a batch-$b$ step then takes time proportional to $b/s(b)$, capturing the sub-linear scaling of hardware utilization.
Throughout, $T$ is the number of generated tokens, $B$ the block length ($K:=T/B$ blocks, $B\mid T$), $M$ the number of MH moves per block, and $N$ the number of SMC particles.

\subsection{Cost of Power-SMC / SIR}

We begin with Power-SMC because its cost is straightforward and serves as the baseline for comparison.

\begin{lemma}[Power-SMC cost]
\label{lem:smc-cost}
Under KV caching, Power-SMC with $N$ particles and horizon $T$ performs $C_{\mathrm{SMC}} = N\cdot T$ token-evals, with wall-clock time proportional to $\mathrm{Time}_{\mathrm{SMC}} \propto T\cdot N/s(N)$.
\end{lemma}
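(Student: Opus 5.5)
The argument is a direct count over the loop structure of Algorithm~\ref{alg:smc}, using the cost model just introduced. Under KV caching, every non-resampling operation inside one iteration of the outer \texttt{for} loop is cheap once the next-token logits are available. This covers sampling $y_t^{(i)}\sim q_t$, evaluating the incremental weight~\eqref{eq:inc}, and updating the termination flags. Each particle's next-token logits $p_\theta(\cdot\mid x,y^{(i)}_{<t})$ come from one cached forward step on its most recent token, i.e., one token-eval. Note that for the locally optimal proposal of Theorem~\ref{thm:localopt} (or any temperature proposal, Corollary~\ref{cor:temp}), $q_t$ and $p_t$ are both obtained from the same logits $\ell_t$. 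So computing the weight~\eqref{eq:inc} requires no extra forward pass.

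\textbf{Step 1 (per-step count).} I would argue that at each step $t$, the $N$ particles are advanced together as one batched decode of batch size $N$, which costs $N$ token-evals. Absorbed (\textsf{done}) particles I would count conservatively. They still occupy a batch slot in a fixed-shape implementation, so $N$ is an upper bound that is attained in the standard implementation. If one counts only active particles, the bound reads $\le N\cdot T$, and I would state the lemma as the worst case (equality) accordingly.

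\textbf{Step 2 (resampling contributes no token-evals).} This is the step needing the most care, since it is where the equality could fail. I would observe that systematic resampling only computes ancestor indices $A_{1:N}$ from the normalized weights, which is $O(N)$ scalar work. It then copies token prefixes and reindexes the KV cache along the batch dimension (Appendix~\ref{sec:systems}). No forward pass is needed, because after reordering, each particle's cache already encodes the full prefix $y^{(A_k)}_{1:t}$, and decoding resumes from that cache. Hence resampling adds memory traffic but zero token-evals. Likewise, the $\alpha$-ramping boundary update of Section~\ref{sec:ramping} reuses the accumulated $\log p_\theta(y_{1:t}\mid x)$ and also adds none.

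\textbf{Step 3 (sum and convert to time).} Summing Step 1 over $t=1,\dots,T$ gives $C_{\mathrm{SMC}}=N\cdot T$. For wall-clock time, each step is a single batch-$N$ decode, which by the definition of $s(\cdot)$ takes time proportional to $N/s(N)$. The $T$ steps are sequential, so $\mathrm{Time}_{\mathrm{SMC}}\propto T\cdot N/s(N)$. The $O(N)$ weight, ESS, and resampling overheads are lower order and are absorbed into the proportionality constant under the engine-independent model.
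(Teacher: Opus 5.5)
Your proof is correct and follows the same argument as the paper: each of the $T$ decode steps is one batch-$N$ forward pass costing $N$ token-evals, weight updates and resampling are $O(N)$ bookkeeping with no additional forward passes, and $T$ sequential steps at time $N/s(N)$ each give the wall-clock bound. Your extra remarks are refinements the paper leaves implicit: the weight reuses the same logits, KV-cache reindexing needs no recomputation, $\alpha$-ramping adds no token-evals, and equality assumes absorbed particles keep their batch slots.
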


\begin{proof}
Each of $T$ decode steps advances all $N$ particles (one batch-$N$ forward pass: $N$ token-evals), yielding $NT$ total.
Weight updates and resampling are $O(N)$ per step and do not change the leading term.
Wall-clock time follows from $T$ steps at cost $N/s(N)$ each.
\end{proof}

\subsection{Cost of MH power sampling}

We next analyze the MH construction of~\citet{karan2025reasoning}, where each move selects an edit point and regenerates the suffix autoregressively.
We consider two edit-index regimes to separate algorithmic structure from implementation choices.

\paragraph{General decomposition.}
Let $L_{k,m}$ be the regenerated suffix length in MH move $m\in\{1,\dots,M\}$ within block $k\in\{1,\dots,K\}$.
Block extension costs $B$ token-evals and each move costs $L_{k,m}$, so
\begin{equation}
C_{\mathrm{MH}} = \sum_{k=1}^K \Bigl(B + \sum_{m=1}^M L_{k,m}\Bigr) = T + \sum_{k=1}^K \sum_{m=1}^M L_{k,m},
\label{eq:mh-general}
\end{equation}
with expectation $\E[C_{\mathrm{MH}}] = T + M\sum_{k=1}^K \E[L_k]$, where $L_k$ is the suffix length in a representative move within block~$k$.

\begin{lemma}[Global-edit MH cost]
\label{lem:mh-global}
If each move in block $k$ edits uniformly over the full prefix of length $t_k=kB$, then $\E[L_k]\approx kB/2$ and
$\E[C_{\mathrm{MH}}] \approx T\!\bigl(1 + M(K{+}1)/4\bigr)$,
which is $\Theta(T^2/B)$ for fixed~$B$.
\end{lemma}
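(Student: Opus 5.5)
The plan is to work from the general decomposition~\eqref{eq:mh-general}, using $\E[C_{\mathrm{MH}}] = T + M\sum_{k=1}^K \E[L_k]$. Once the per-block expected suffix length is known, the rest is a short summation.

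\emph{Step 1: the expected suffix length.} In block $k$ the current sequence has length $t_k = kB$. A global-edit move draws an edit index $j$ uniformly from $\{1,\dots,t_k\}$ and regenerates positions $j,\dots,t_k$, which gives $L_k = t_k - j + 1$. Then
\[
\E[L_k]=\frac{1}{t_k}\sum_{j=1}^{t_k}(t_k-j+1)=\frac{t_k+1}{2}\approx \frac{kB}{2}.
\]
The $+1/2$ is an off-by-one term that depends on the indexing convention (for example, indices $0,\dots,t_k-1$ and regeneration of $t_k-j$ tokens give exactly $t_k/2$). I will record it as lower order, which is why the statement is written with $\approx$.

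\emph{Step 2: summing over blocks.} Substituting $\E[L_k]\approx kB/2$ gives
\[
\E[C_{\mathrm{MH}}]\approx T+M\sum_{k=1}^K\frac{kB}{2}=T+\frac{MB}{2}\cdot\frac{K(K+1)}{2}=T+\frac{MBK(K+1)}{4}.
\]
Since $BK=T$, this equals $T\bigl(1+M(K+1)/4\bigr)$, which is the claimed expression.

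\emph{Step 3: asymptotics.} With $M$ and $B$ fixed and $K=T/B$, the expression becomes $T + MT(T/B+1)/4 = \Theta(T^2/B)$. The dominant term $MT^2/(4B)$ gives the upper bound. The lower bound holds because every term is nonnegative and this term is present with a positive constant.

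The only delicate point is Step 1: stating precisely the convention for the edit index and suffix length, and justifying that the exact value $(t_k+1)/2$ differs from $kB/2$ only by an additive $MK/2 = O(T)$ in total. That additive error does not affect the $\Theta(T^2/B)$ conclusion.
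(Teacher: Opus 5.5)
Your proof is correct and follows the paper's argument: a uniform edit index gives $\E[L_k]=(t_k+1)/2\approx kB/2$, summing $M\sum_{k=1}^K kB/2$ gives $T+MBK(K+1)/4$, and substituting $T=KB$ finishes. One small slip is in your parenthetical: an edit index uniform on $\{0,\dots,t_k-1\}$ with $t_k-j$ regenerated tokens (the paper's own convention) still gives $(t_k+1)/2$, not $t_k/2$, since $L_k$ is again uniform on $\{1,\dots,t_k\}$; this is immaterial, because the $O(MK)=O(T)$ discrepancy is absorbed by the $\approx$, as you note.
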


\begin{proof}
Uniform edit on $\{0,\dots,t_k-1\}$ gives $\E[L_k]=(t_k+1)/2\approx kB/2$.
Summing: $\E[C_{\mathrm{MH}}] \approx T + (MB/2)\cdot K(K{+}1)/2 = T + MBK(K{+}1)/4$; substituting $T=KB$ yields the result.
\end{proof}

\begin{lemma}[Last-block edit MH cost]
\label{lem:mh-lastblock}
If each move edits uniformly within the most recent block only, then $\E[L_k]\approx B/2$ for all~$k$ and
$\E[C_{\mathrm{MH}}] \approx T(1+M/2)$.
\end{lemma}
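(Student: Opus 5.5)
The plan is to specialize the general decomposition~\eqref{eq:mh-general} in the same way as the proof of Lemma~\ref{lem:mh-global}, changing only the distribution of the edit index. The whole argument is a bookkeeping computation of the expected suffix length, followed by summation.

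\textbf{Step 1: expected suffix length.} In block $k$, the current sequence has length $t_k=kB$, and the most recent block occupies positions $(k-1)B,\dots,kB-1$. If a move picks its edit index $j$ uniformly from these $B$ positions, it regenerates $t_k-j$ tokens. This length is uniform on $\{1,\dots,B\}$, so
\[
\E[L_k]=\frac{B+1}{2}\approx \frac{B}{2}.
\]
The key point is that this value does not depend on $k$. This contrasts with the global-edit case, where $\E[L_k]$ grew linearly in $k$.

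\textbf{Step 2: summation.} Substituting into $\E[C_{\mathrm{MH}}]=T+M\sum_{k=1}^K\E[L_k]$ gives
\[
\E[C_{\mathrm{MH}}]\approx T+MK\cdot\frac{B}{2}.
\]
Using $T=KB$, this becomes $T+MT/2=T(1+M/2)$.

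\textbf{Step 3: the approximation.} The only point needing care is the ``$\approx$''. The exact value is $T+MK(B+1)/2=T\bigl(1+M(B+1)/(2B)\bigr)$. This matches the stated form up to a relative factor of $1+1/B$, which is negligible for the block lengths considered. Depending on whether the index set is taken as $\{0,\dots,B-1\}$ or $\{1,\dots,B\}$ relative to the block start, the exact value may instead be $(B-1)/2$ or $(B+1)/2$; either choice gives the same leading term. There is no real obstacle: the proof is a two-line computation mirroring Lemma~\ref{lem:mh-global}.
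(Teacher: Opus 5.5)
Your proposal is correct and matches the paper's proof essentially line for line. Both arguments note that the regenerated suffix length is uniform on $\{1,\dots,B\}$, so $\E[L_k]=(B+1)/2\approx B/2$ independently of $k$, and then substitute into $\E[C_{\mathrm{MH}}]=T+M\sum_k \E[L_k]$ with $T=KB$. Your Step~3, which gives the exact value $T\bigl(1+M(B+1)/(2B)\bigr)$, is a harmless refinement that the paper leaves implicit.
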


\begin{proof}
The offset to block end is uniform on $\{1,\dots,B\}$, so $\E[L_k]=(B{+}1)/2\approx B/2$.
Substituting: $\E[C_{\mathrm{MH}}] \approx T + MK(B/2)=T(1+M/2)$.
\end{proof}

\subsection{Compute and latency ratios}

Combining the results above directly yields the following comparisons.

\begin{corollary}[Compute ratio: global-edit MH vs.\ Power-SMC]
\label{cor:ratio-compute}
$\E[C_{\mathrm{MH}}]/C_{\mathrm{SMC}} \approx \bigl(1 + M(K{+}1)/4\bigr)/N.$
\end{corollary}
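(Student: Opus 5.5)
The plan is to divide the two cost expressions already established and simplify. Lemma~\ref{lem:mh-global} gives the global-edit MH cost as $\E[C_{\mathrm{MH}}]\approx T\bigl(1+M(K{+}1)/4\bigr)$. Lemma~\ref{lem:smc-cost} gives the Power-SMC cost as $C_{\mathrm{SMC}}=N\cdot T$. Both are measured in the same unit (token-evals) and share the horizon $T$, so the ratio follows at once:
\[
\frac{\E[C_{\mathrm{MH}}]}{C_{\mathrm{SMC}}}\approx\frac{T\bigl(1+M(K{+}1)/4\bigr)}{N\,T}=\frac{1+M(K{+}1)/4}{N}.
\]

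The steps, in order, are as follows.

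\begin{enumerate}[leftmargin=1.8em,itemsep=2pt]
\item Record that $C_{\mathrm{SMC}}$ is deterministic. It does not depend on the sampled tokens, so no expectation is needed in the denominator.
\item Note that dividing $\E[C_{\mathrm{MH}}]$ by a constant commutes with the expectation. The ratio is therefore exactly $\E[C_{\mathrm{MH}}/C_{\mathrm{SMC}}]$, and there is no Jensen-type issue.
\item Cancel the common factor $T$, which requires $T>0$.
\end{enumerate}

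The only point requiring care is what ``$\approx$'' means. In Lemma~\ref{lem:mh-global} it comes from replacing $(t_k+1)/2$ by $t_k/2$. I would make this explicit by carrying the exact term: the exact expectation is $T + M\sum_k (kB+1)/2 = T\bigl(1+M(K{+}1)/4\bigr) + MK/2$. After dividing by $NT$, the extra term becomes $M/(2BN)$. This is a relative error of order $1/B$ compared with the leading $M(K{+}1)/(4N)$, so the approximation in the corollary has the same precision as the lemma. There is no real obstacle beyond this bookkeeping.
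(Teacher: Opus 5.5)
Your proposal is correct and follows the paper's route exactly: the corollary comes from dividing the global-edit estimate of Lemma~\ref{lem:mh-global} by the deterministic $C_{\mathrm{SMC}}=NT$ of Lemma~\ref{lem:smc-cost} and cancelling $T$. Your exact bookkeeping is also right, and the bound is slightly tighter than you state: the dropped term $M/(2BN)$ is a relative error of $2/\bigl(B(K{+}1)\bigr)$ against $M(K{+}1)/(4N)$, i.e.\ of order $1/T$ rather than merely $1/B$.
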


\begin{corollary}[Wall-clock ratio]
\label{cor:ratio-lat}
Assuming MH moves execute at batch~$1$ while Power-SMC uses batch~$N$,
\[
\mathrm{Time}_{\mathrm{MH}}\big/\mathrm{Time}_{\mathrm{SMC}}
\;\approx\;
\bigl(1+M(K{+}1)/4\bigr)\cdot s(N)/N
\quad\text{(global-edit)}.
\]
\end{corollary}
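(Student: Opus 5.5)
The plan is to divide the two wall-clock expressions, taking each from the cost results already established. For MH under the global-edit regime, Lemma~\ref{lem:mh-global} gives the expected token-eval count $\E[C_{\mathrm{MH}}]\approx T\bigl(1+M(K{+}1)/4\bigr)$. Under the stated assumption, every MH step (block extension and suffix regeneration alike) runs as a batch-$1$ decode. In the cost model, a batch-$b$ step takes time proportional to $b/s(b)$, and $s(1)=1$ by definition, since $s$ is normalized relative to batch~$1$. So each MH token-eval costs one unit of time, and $\mathrm{Time}_{\mathrm{MH}}\propto \E[C_{\mathrm{MH}}]\approx T\bigl(1+M(K{+}1)/4\bigr)$.

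For Power-SMC, Lemma~\ref{lem:smc-cost} gives $\mathrm{Time}_{\mathrm{SMC}}\propto T\cdot N/s(N)$. Both quantities are measured in the same unit, namely the time of one batch-$1$ cached step, so the proportionality constants are equal and cancel in the ratio. Dividing gives
\[
\frac{\mathrm{Time}_{\mathrm{MH}}}{\mathrm{Time}_{\mathrm{SMC}}}\approx \frac{T\bigl(1+M(K{+}1)/4\bigr)}{T N/s(N)}=\bigl(1+M(K{+}1)/4\bigr)\frac{s(N)}{N}.
\]
A useful consistency check is that this equals Corollary~\ref{cor:ratio-compute} multiplied by $s(N)$, which is exactly the batching speedup Power-SMC receives and MH does not.

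The only real subtlety is bookkeeping. First, the "$\approx$" carries over the approximations already made in Lemma~\ref{lem:mh-global}: $(t_k+1)/2\approx kB/2$, and non-decode overheads are ignored. Second, the statement compares expected MH time to deterministic SMC time, which is justified by linearity, because MH time is linear in the token-eval count. I would also state explicitly that each lemma has a common constant (the batch-$1$ step time), so the ratio is meaningful. No further obstacle is expected.
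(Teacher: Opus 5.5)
Your proposal is correct and follows the paper's proof: it takes $\mathrm{Time}_{\mathrm{MH}}\propto \E[C_{\mathrm{MH}}]$ at batch~$1$, with $\E[C_{\mathrm{MH}}]$ from Lemma~\ref{lem:mh-global} and $s(1)=1$, and divides by $\mathrm{Time}_{\mathrm{SMC}}\propto T\cdot N/s(N)$ from Lemma~\ref{lem:smc-cost}. Your explicit points about the shared batch-$1$ time unit and the consistency check against Corollary~\ref{cor:ratio-compute} make precise what the paper leaves implicit.
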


\begin{proof}
$\mathrm{Time}_{\mathrm{MH}}\propto \E[C_{\mathrm{MH}}]$ (batch~$1$) and $\mathrm{Time}_{\mathrm{SMC}}\propto T\cdot N/s(N)$ (Lemma~\ref{lem:smc-cost}).
\end{proof}

\begin{corollary}[MH overhead floor under last-block edit]
\label{cor:mh-floor}
The expected MH overhead relative to baseline decoding satisfies
$\rho_{\mathrm{MH}} \gtrsim 1+M/2$;
for $M=10$ this gives $\rho_{\mathrm{MH}}\gtrsim 6$ even under a perfect inference engine.
\end{corollary}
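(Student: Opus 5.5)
The plan is to compare the expected MH token-eval count from Lemma~\ref{lem:mh-lastblock} with the cost of baseline decoding. Baseline decoding is plain autoregressive generation of $T$ tokens at batch~$1$. It therefore costs exactly $T$ token-evals and takes time proportional to $T$, since the throughput multiplier satisfies $s(1)=1$ by definition. We define the overhead as
\[
\rho_{\mathrm{MH}}:=\mathrm{Time}_{\mathrm{MH}}/\mathrm{Time}_{\mathrm{base}}.
\]
Under the standing assumption of Corollary~\ref{cor:ratio-lat} that MH moves execute at batch~$1$, this ratio equals $\E[C_{\mathrm{MH}}]/T$.

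The first step is to obtain a lower bound on $\E[C_{\mathrm{MH}}]$ that does not depend on implementation details. In the general decomposition~\eqref{eq:mh-general}, every block must be extended by $B$ tokens, and every one of the $M$ moves per block regenerates a suffix of length $L_{k,m}\ge 1$. Under the last-block edit regime, Lemma~\ref{lem:mh-lastblock} gives $\E[L_k]=(B+1)/2\ge B/2$. Substituting this yields
\[
\E[C_{\mathrm{MH}}]\ge T + MK\,B/2 = T(1+M/2).
\]
Dividing by $T$ then gives $\rho_{\mathrm{MH}}\gtrsim 1+M/2$, and taking $M=10$ gives $6$.

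The remaining step, which is the only delicate one, is to justify calling this a floor that holds ``even under a perfect inference engine.'' The accept/reject decision of move $m$ must be known before move $m+1$ can propose its edit. This serial dependence means the regenerated suffixes cannot be batched across moves, so no $s(b)>1$ speedup applies to them. Meanwhile, a perfect engine can at best make each token-eval cost the same as one baseline decode step.

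Two further points complete the argument. Last-block editing is the cheapest of the regimes considered: the global-edit regime of Lemma~\ref{lem:mh-global} is strictly costlier, so its ratio $1+M(K+1)/4$ dominates $1+M/2$. Finally, any extra work (computing acceptance ratios, or rescoring the proposal likelihood) only adds cost, which is why the bound is stated with $\gtrsim$.
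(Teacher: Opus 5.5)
Your argument is correct and follows the same route as the paper, which treats the corollary as immediate: divide the last-block expected cost from Lemma~\ref{lem:mh-lastblock} by the baseline cost $T$ under batch-$1$ execution, and your use of the exact value $\E[L_k]=(B+1)/2\ge B/2$ turns the paper's $\approx$ into a genuine inequality. One small nit: the global-edit regime is only weakly costlier, not strictly, since at $K=1$ the two regimes coincide and $1+M(K+1)/4=1+M/2$.
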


\noindent
As a concrete example, if $N=48$, $M=10$, and $K=16$, the global-edit compute factor is $1+M(K{+}1)/4=43.5$ and $\E[C_{\mathrm{MH}}]/C_{\mathrm{SMC}}\approx 0.91$.
Even here, Power-SMC can be wall-clock favorable when $s(48)$ is large, because its additional compute is batch-parallel rather than serial.
We next describe the main systems consideration needed to realize this parallelism in practice.

% ============================================================
\section{Experiments}
\label{sec:experiments}
% ============================================================

We evaluate on MATH500 and compare:
(i)~baseline decoding,
(ii)~low-temperature decoding at $\tau=1/\alpha$,
(iii)~MH power sampling~\citep{karan2025reasoning},
(iv)~Scalable Power Sampling~\citep{ji2026scalable}, and
(v)~Power-SMC.
We measure end-to-end wall-clock latency on identical hardware using Hugging Face (no specialized inference engine) and report accuracy--latency trade-offs.

\paragraph{Implementation details.}
Unless otherwise noted, we use $N=64$ particles, exponent $\alpha=4$, and a maximum generation length of $T_{\max}=2048$ tokens.
Resampling is triggered when $\ESS_t < \kappa N$ with $\kappa=0.5$.
We optionally apply $\alpha$-ramping with a linear schedule over the first $T_{\mathrm{ramp}}=100$ tokens.
When resampling fires, we perform \emph{systematic resampling}: given normalized weights $w_{1:N}$, a single offset $u_0\sim\Unif(0,1)$ defines evenly spaced positions $p_i=(u_0+i-1)/N$; ancestor indices are $A_i=\min\{j:\sum_{k\le j} w_k \ge p_i\}$.
After resampling, particle prefixes are copied, the Transformer KV cache is reordered (Appendix~\ref{sec:systems}), and weights are reset to uniform.

\begin{table*}[t]
\centering
\small
\setlength{\tabcolsep}{6pt}
\renewcommand{\arraystretch}{1.15}

\resizebox{\textwidth}{!}{%
\begin{tabular}{llcc}
\toprule
\textbf{Model} & \textbf{Method} & \textbf{Accuracy (MATH500, \%)} & \textbf{Latency (rel.\ to baseline)} \\
\midrule
\multirow{5}{*}{\texttt{Qwen2.5-7B}}
& Baseline decoding & 49.8 & 1.00$\times$ \\
& Low-temperature decoding ($\tau{=}1/\alpha$) & 62.8 & 1.24$\times$ \\
& MH power sampling~\citep{karan2025reasoning} & 70.6 & 28.30$\times$ \\
& Scalable Power Sampling~\citep{ji2026scalable}$^{\dagger}$ & 70.8 & 2.5--3.5$\times$ \\
& \textbf{Power-SMC (ours)} & \textbf{71.4} & \textbf{1.64$\times$} \\
\midrule
\multirow{5}{*}{\texttt{Qwen2.5-Math-7B}}
& Baseline decoding & 49.6 & 1.00$\times$ \\
& Low-temperature decoding ($\tau{=}1/\alpha$) & 69.0 & 1.00$\times$ \\
& MH power sampling~\citep{karan2025reasoning} & 74.8 & 18.20$\times$ \\
& Scalable Power Sampling~\citep{ji2026scalable}$^{\dagger}$ & 75.8 & 2.5--3.5$\times$ \\
& \textbf{Power-SMC (ours)} & \textbf{76.2} & \textbf{1.44$\times$} \\
\midrule
\multirow{4}{*}{\texttt{Phi-3.5-mini-instruct}}
& Baseline decoding & 40.0 & 1.00$\times$ \\
& Low-temperature decoding ($\tau{=}1/\alpha$) & 47.8 & 0.91$\times$ \\
& MH power sampling~\citep{karan2025reasoning} & 50.8 & 16.12$\times$ \\
& \textbf{Power-SMC (ours)} & \textbf{51.6} & \textbf{3.25$\times$}
\\
\midrule
\multirow{4}{*}{\texttt{Qwen3-1.7B}}
& Baseline decoding & 73.6 & 1.00$\times$ \\
& Low-temperature decoding ($\tau{=}1/\alpha$) & 74.0 & 0.97$\times$ \\
& MH power sampling~\citep{karan2025reasoning} & 76.2 & 19.34$\times$ \\
& \textbf{Power-SMC (ours)} & \textbf{78.0} & \textbf{1.57$\times$} 
\\
\bottomrule
\end{tabular}%
}

\caption{\textbf{MATH500 pass@1 accuracy and end-to-end wall-clock latency.}
Latency is normalized to baseline decoding for each model ($1.00\times$) and measured under the same Hugging Face evaluation stack and hardware.
$^{\dagger}$\citet{ji2026scalable} report their rollout-based configuration is typically 2.5--3.5$\times$ slower than standard decoding; we list their reported range for context since they do not report model-specific normalized latencies in our exact stack.}
\label{tab:math500_latency}
\end{table*}

\paragraph{Results.}
Table~\ref{tab:math500_latency} shows that Power-SMC achieves the best pass@1 among training-free samplers across all three models while remaining close to baseline latency on the two Qwen models (1.44--1.64$\times$).
MH power sampling~\citep{karan2025reasoning} reaches comparable accuracy but incurs 16--28$\times$ wall-clock overhead, consistent with its inherently sequential structure and repeated suffix regeneration.
\citet{ji2026scalable} attain similar accuracy via rollout-based lookahead but report a 2.5--3.5$\times$ overhead, placing it in a qualitatively different latency regime than Power-SMC.
Low-temperature decoding recovers a substantial portion of the gain at near-zero overhead but consistently leaves a nontrivial accuracy gap to sequence-level methods, supporting the need for global correction beyond token-level temperature alone.

% ============================================================
% \section{Discussion and Limitations}
% \label{sec:discussion}
% % ============================================================

% \paragraph{Scope of Power-SMC.}
% Power-SMC targets the same sequence-level power distribution as MH power sampling.
% It cannot move probability mass into regions the base model assigns negligible likelihood; it sharpens within the support of $p_\theta$.
% At finite $N$, its approximation error is the standard Monte Carlo error of particle systems~\citep{doucet2001smc,delmoral2004}.

% \paragraph{Degeneracy and diversity.}
% Even under the locally optimal token proposal, path-wise weight dispersion remains (Eq.~\ref{eq:pathweight}); resampling combats degeneracy but may reduce particle diversity if triggered too often.
% Exponent-bridging and careful ESS thresholds help manage this trade-off.

% \paragraph{Toward stronger guarantees.}
% SMC samplers can incorporate rejuvenation kernels (e.g., MH-within-SMC) that preserve intermediate targets while improving exploration~\citep{delmoral2006smc}.
% Systematic exploration of such hybrids is left to future work.

% ============================================================
\section{Conclusion}
\label{sec:conclusion}
% ============================================================

We introduced Power-SMC, a low-latency particle sampler for the sequence-level power distribution $\pi_\alpha(y\mid x)\propto p_\theta(y\mid x)^\alpha$.
Power-SMC avoids MH's serial accept/reject structure and instead leverages batch-parallel decoding.
On the theoretical side, we proved that temperature $\tau=1/\alpha$ is the unique locally variance-minimizing prefix-only proposal and gave a R\'enyi-entropy interpretation of the residual weight dispersion.
On the practical side, we described exact $\alpha$-ramping schedules, cache-safe resampling for Transformers inference, and formalized engine-independent compute/latency comparisons yielding MH overhead floors.
Empirically, Power-SMC matches or exceeds MH power sampling on MATH500 while reducing inference latency from order-of-magnitude overheads to modest increases.

% \subsubsection*{Acknowledgments}
% Use unnumbered third level headings for the acknowledgments. All
% acknowledgments, including those to funding agencies, go at the end of the paper.

\bibliography{iclr2026_conference}
\bibliographystyle{iclr2026_conference}

\newpage

\appendix
\section{Proof of Theorem~\ref{thm:localopt} and Corollary~\ref{cor:temp}}
\label{app:proofs}

\begin{proof}[Proof of Theorem~\ref{thm:localopt}]
Fix a prefix $y_{<t}$ and abbreviate $p(v):=p_t(v)$ and $q(v):=q_t(v)$ over $v\in \vocab\cup\{\EOS\}$.
The incremental weight for sampling $v\sim q$ is $\omega(v)=p(v)^\alpha/q(v)$.

\emph{Step 1: the conditional mean is invariant to $q$.}
\[
\E_{v\sim q}[\omega(v)]
=\sum_v q(v)\frac{p(v)^\alpha}{q(v)}
=\sum_v p(v)^\alpha
=:Z(\alpha;\,y_{<t}).
\]

\emph{Step 2: minimize the second moment.}
\[
\E_{v\sim q}[\omega(v)^2]
=\sum_v q(v)\left(\frac{p(v)^\alpha}{q(v)}\right)^{\!2}
=\sum_v \frac{p(v)^{2\alpha}}{q(v)}.
\]
We minimize $\sum_v p(v)^{2\alpha}/q(v)$ subject to $\sum_v q(v)=1$ and $q(v)>0$ whenever $p(v)>0$.
The Lagrangian is
$
\mathcal{L}(q,\lambda)=\sum_v {p(v)^{2\alpha}}/{q(v)}+\lambda\bigl(\sum_v q(v)-1\bigr).
$
Stationarity for each $v$ yields
\[
\frac{\partial \mathcal{L}}{\partial q(v)}=-\frac{p(v)^{2\alpha}}{q(v)^2}+\lambda=0
\quad\Rightarrow\quad
q(v)=\frac{p(v)^\alpha}{\sqrt{\lambda}}.
\]
Normalization $\sum_v q(v)=1$ gives $\sqrt{\lambda}=\sum_v p(v)^\alpha=Z(\alpha;\,y_{<t})$, hence
$q^\star(v)=p(v)^\alpha\big/Z(\alpha;\,y_{<t})$, which is unique.

\emph{Step 3: verify zero conditional variance.}
Substituting into $\omega(v)=p(v)^\alpha/q^\star(v)$ yields $\omega(v)\equiv Z(\alpha;\,y_{<t})$ for all~$v$, so the conditional variance is~$0$.
\end{proof}

\begin{proof}[Proof of Corollary~\ref{cor:temp}]
If $p_t=\softmax(\ell_t)$, then $p_t(v)^\alpha \propto \exp(\alpha\, \ell_t(v))$.
Thus $q_t^\star(v)\propto p_t(v)^\alpha$ equals $\softmax(\alpha\,\ell_t)$, which is temperature sampling with $\tau=1/\alpha$ (i.e., logits divided by~$\tau$).
\end{proof}

\section{Exact Exponent-Bridging ($\alpha$-Ramping)}
\label{app:ramping}

Let $1=\alpha^{(0)}<\alpha^{(1)}<\cdots<\alpha^{(L)}=\alpha$ be a schedule and define intermediate unnormalized prefix targets
\[
\gamma_t^{(\ell)}(y_{1:t}\mid x)\propto p_\theta(y_{1:t}\mid x)^{\alpha^{(\ell)}}.
\]
Within stage $\ell$, the incremental importance weight is obtained by replacing $\alpha$ with $\alpha^{(\ell)}$ in Eq.~\eqref{eq:inc}.
At chosen boundaries (after a token index $t$), the log-weight update
\[
\log \tilde W \;\leftarrow\; \log \tilde W + (\alpha^{(\ell)}-\alpha^{(\ell-1)})\cdot \log p_\theta(y_{1:t}\mid x),
\quad
\log p_\theta(y_{1:t}\mid x)=\sum_{s=1}^t \log p_\theta(y_s\mid x,y_{<s}),
\]
transitions the target from $\gamma^{(\ell-1)}$ to $\gamma^{(\ell)}$.
Since $\prod_\ell p^{\alpha^{(\ell)}-\alpha^{(\ell-1)}}=p^{\alpha^{(L)}-\alpha^{(0)}}=p^{\alpha-1}$, the cumulative reweighting is identical to directly targeting the final exponent $\alpha$, preserving correctness.
Within stage~$\ell$, the locally optimal prefix-only proposal is $q_{t,\ell}^\star(\cdot)\propto p_t(\cdot)^{\alpha^{(\ell)}}$, i.e., temperature $\tau_\ell=1/\alpha^{(\ell)}$.

\section{Systems: Cache-Safe Resampling for Transformer Decoding}
\label{sec:systems}
% ============================================================

Power-SMC's resampling step requires \emph{reindexing particle ancestry}: when a high-weight particle is duplicated, its model state must be copied as well.
For autoregressive Transformers, the dominant state is the \emph{KV cache}---stored attention keys and values for each particle's prefix.
Because KV cache layouts differ across architectures and library versions, we implement cache reordering with a three-tier strategy:
(i)~use model-provided cache-reordering hooks when available (e.g., \texttt{\_reorder\_cache});
(ii)~fall back to cache-object reorder methods exposed by the runtime; and
(iii)~otherwise apply a recursive tensor reindexer that treats caches as nested containers and reindexes along the batch/particle dimension without assuming a specific internal structure.
This makes resampling correct and efficient across common Hugging Face backends.

\section{EOS and Variable-Length Decoding}
\label{app:eos}

We treat $\EOS$ as an ordinary token in $\vocab\cup\{\EOS\}$.
Once a particle emits $\EOS$, it transitions to an absorbing state: subsequent steps apply a no-op transition with incremental weight~$1$.
In implementation, this is achieved by masking proposals to force $\EOS$ for terminated particles and skipping cache updates for those particles.

\section{Resampling Choices}
\label{app:resampling}

Our implementation uses \emph{systematic resampling} (Algorithm~\ref{alg:smc}), which is unbiased and typically lower-variance than multinomial resampling~\citep{doucet2011tutorial}.
More broadly, any standard unbiased resampling scheme (multinomial, stratified, residual, systematic) preserves the target distribution; these choices primarily affect variance and particle diversity.

\end{document}